\newtheorem{theorem}{Theorem}[section]
\newtheorem{corollary}[theorem]{Corollary}
\newtheorem{lemma}[theorem]{Lemma}
\newtheorem{definition}{Definition}
\newcommand{\eps}{\varepsilon}
\let\oldnl\nl
\newcommand{\nonl}{\renewcommand{\nl}{\let\nl\oldnl}}
\newcommand{\expect}[2]{\mathop{\mathbb{E}}_{#1} \left[ #2 \right]}
\newcommand\ceil[1]{\left\lceil #1 \right\rceil}
\newcommand{\naturals}{\mathbb{N}}
\newcommand{\reals}{\mathbb{R}}
\DeclareMathOperator*{\argmax}{arg\,max}
\renewcommand{\AE}{\mbox{AE}\xspace}
\newcommand{\UCB}{\mbox{UCB}\xspace}
\newcommand{\SDPAE}{\mbox{SDP-AE}\xspace}
\newcommand{\VBSDPAE}{\mbox{VB-SDP-AE}\xspace}
\title{Differentially Private Multi-Armed Bandits in the Shuffle Model}
\author{
  Jay Tenenbaum\thanks{Google Research. \href{mailto:jayten@google.com}{jayten@google.com}.}
  \And
  Haim Kaplan\thanks{Blavatnik School of Computer Science, Tel Aviv University and Google Research. \href{mailto:haimk@tau.ac.il}{haimk@tau.ac.il}. 
}
  \And 
  Yishay Mansour\thanks{Blavatnik School of Computer Science, Tel Aviv University and Google Research. \href{mailto:mansour.yishay@gmail.com}{mansour.yishay@gmail.com}. 
}
  \And 
  Uri Stemmer\thanks{Blavatnik School of Computer Science, Tel Aviv University and Google Research. \href{mailto:u@uri.co.il}{u@uri.co.il}.
}
}
\begin{document}
\maketitle

\begin{abstract}
    We give an $(\eps,\delta)$-differentially private algorithm for the multi-armed bandit (MAB) problem in the shuffle model with a distribution-dependent regret of $O\left(\left(\sum_{a\in [k]:\Delta_a>0}\frac{\log T}{\Delta_a}\right)+\frac{k\sqrt{\log\frac{1}{\delta}}\log T}{\eps}\right)$, and a distribution-independent regret of $O\left(\sqrt{kT\log T}+\frac{k\sqrt{\log\frac{1}{\delta}}\log T}{\eps}\right)$, where $T$ is the number of rounds, $\Delta_a$ is the suboptimality gap of the arm $a$, and $k$ is the total number of arms. Our upper bound almost matches the regret of the best known algorithms for the centralized model, and significantly outperforms the best known algorithm in the local model.
\end{abstract}

\section{Introduction}\label{sec:intro}
The \textit{multi-armed bandit (MAB)} problem is a classical sequential decision-making problem in which an agent tries to maximize a cumulative stochastic reward~\cite{thompson1933likelihood, robbins1952some} under uncertainty. This problem, which is applicable to various areas such as recommender systems, online advertising and clinical trials, embodies the well known exploration-exploitation trade-off between learning the environment and acting optimally based on our current knowledge about the environment. 

More formally, in the MAB problem at each time $t=1,\ldots,T$ an agent chooses an arm $i$ from the set $[k]=\{1,\ldots,k\}$ of $k$ arms, and obtains an iid reward $r^t$ drawn from the unknown distribution $R_i$ over $\{0,1\}$ with expectation $\mu_i=\expect{}{R_i}$. Let $a^*=\argmax_{a}\mu_{a}$ be an arm with the largest expected reward, and denote this reward by $\mu^*=\mu_{a^*}$. Let the (suboptimality) gap of an arm $a$ to be the gap between its expected reward and that of $a^*$, i.e., $\Delta_a = \mu^* - \mu_a$. The agent's goal is to maximize the total expected reward, or rather to minimize the expected regret $R(T)=T\cdot \mu^*-\expect{}{\sum_{i=1}^{T}r^t}$ defined to be the expected gap between the algorithm and the optimal algorithm that knows the distributions $R_i$.

In this work we address the privacy in such a setting.
As a motivating example, consider an advertisement system in which 
the server presents to each user an
advertisement $a\in [k]$.
The user then decides whether to click on the 
advertisement or not.
This click decision depends on different private characteristic of the user. The user then reports to the server whether it
clicked on the advertisement (in which case its reward is $r=1$) or not ($r=0$).
 From this example, it is clear that $r$ is private information of the user, and using traditional algorithms for the MAB problem incautiously might leak user-private data.

In order to mathematically alleviate privacy concerns, Dwork et al.~\cite{dwork2006calibrating} defined the notion of \textit{differential privacy (DP)}, which requires that the output of the computation has a limited dependency on any single user's data. Formally, a mechanism $(\eps,\delta)$-DP if for any pair of neighboring inputs (differing by a single user's data), the probability that the mechanism outputs a value in any set $B$ is not different by more than a multiplicative factor of $e^\eps$ and an additive factor of $\delta$. \textit{Differential privacy} has been extensively studied under many different sub-models of privacy. On one end of the spectrum lies the \textit{centralized model} of differential privacy, where the users trust the server with their data, and the liability to protect user privacy lies on the server, who must make sure that any data published externally (e.g., aggregated statistics) respects the privacy constraints. On the other end of the spectrum lies the (strictly stronger) \textit{local model} of differentialy privacy (LDP), where the user privatizes its own data prior to sending it to the server.\footnote{Formally, in the non-interactive setting, a mechanism is $(\eps,\delta)$-LDP if for any two user inputs, the probability that the privatizer sends the server a value in any set $B$ is not different by more than a multiplicative factor of $e^\eps$ and an additive factor of $\delta$.}

Differentially private versions of the MAB problem have been considered in various previous works, where the private information are users' rewards (two neighboring inputs differ by the reward value of a single user), and the algorithm's output is the subsequent arm(s) it selects.
Table~\ref{tab:results} summarizes the best known distribution-dependent regret bounds for the various privacy models, together with our new result in the \textit{shuffle model} which we soon define formally.

\begin{table}
    \caption{Best-known MAB regret upper and lower bounds for various DP models.}
    \label{tab:results}
    \centering
    \begin{tabular}{ll}
    \toprule
    \textbf{Privacy model} & \textbf{Best-known regret upper and lower bounds\tablefootnote{The corresponding distribution-independent regret bounds usually simply replace the $\sum_{a\in [k]:\Delta_a>0}\frac{\log T}{\Delta_a}$ term with $\sqrt{kT\log T}$.}}\\ 
    \midrule
    \textbf{Centralized $(\eps,0)$-DP} & $\Theta\left(\left(\sum_{a\in [k]:\Delta_a>0} \frac{\log T}{\Delta_a}\right)+\frac{k\log T}{\eps}\right)$~\cite{sajed2019optimal,shariff2018differentially}\\ 
    \textbf{Centralized $(\eps,\delta)$-DP} & $O\left(\left(\sum_{a\in [k]:\Delta_a>0} \frac{\log T}{\Delta_a}\right)+\frac{k}{\eps}\right)$~\cite{tossou2016algorithms}\\ 
    \textbf{Local $(\eps,0)$-DP} & $\Theta\left(\frac{1}{\eps^2}\sum_{a\in [k]:\Delta_a>0}\frac{\log T}{\Delta_a}\right)$~\cite{ren2020multi}\tablefootnote{This lower bound can be extended from $(\eps,0)$-LDP to $(\eps,\delta)$-LDP using arguments from Bun et al.~\cite{bun2019heavy} and Cheu et al.~\cite{cheu2019distributed} since we focus on single-round (non-interactive) mechanisms in which the user can only send information to another party once. 
    }\\ 
    \textbf{Shuffle $(\eps,\delta)$-DP (ours)} & $O\left(\left(\sum_{a\in [k]:\Delta_a>0}\frac{\log T}{\Delta_a}\right)+\frac{k\sqrt{\log\frac{1}{\delta}}\log T}{\eps}\right)$\\
    \bottomrule
    \end{tabular}

\end{table}
With real-world algorithms gradually moving away from the \textit{centralized model} of privacy, the immediate question is \enquote{can we design a private algorithm for MAB with privacy guarantees which are similar to local DP, but with similar regret to centralized DP (without a multiplicative $1/\eps^2$ factor)?}.

To address the inevitable gap between the local and centralized models, which is in fact common in the literature of differential privacy, the alternative \textit{shuffle model}~\cite{bittau2017prochlo,cheu2019distributed,erlingsson2019amplification} explores the space in between the local and centralized models by introducing a trusted shuffler that receives user messages and permutes them (i.e., disassociates a message from its sender) before they are delivered to the server. 
For privacy analysis, we assume that the shuffle is  perfectly secure, i.e., its output contains no information about which user generated each of the messages. This is traditionally achieved by the shuffler stripping implicit metadata from the messages (e.g., timestamps, routing information), and frequently forwarding this data to remove time and order information.
The shuffle model ensures that sufficiently many reports are collected in each round so that any one report can hide in a shuffled batch. In order to apply the shuffle model to the MAB problem in the context of advertisements, we divide the algorithm into batches, where before each batch we decide on the fly its size $m$, and then present the $m$ next users the same advertisement $a$, and finally apply a private shuffle model mechanism to their rewards to communicate reward aggregate information to the server.\footnote{We remark that a given user does not know in advance the size of the batch in which it participates, since this size depends on the algorithm's run.}

A constantly growing body of work presents new and improved mechanisms in the shuffle model for basic statistical tasks~\cite{ghazi2019scalable, erlingsson2019amplification, erlingsson2020encode, balle2019privacy}, such as \textit{private binary summation}, in which the server must privately approximate the sum of a collection of values $x_1,..., x_m \in \{0, 1\}$ held by the $m$ users in the shuffle. For \textit{private binary summation}, the optimal achievable errors in the central, local and shuffle model are $\tilde{\Theta}(1/\eps)$~\cite{dwork2006calibrating}, $\tilde{\Theta}(\sqrt{m}/\eps)$~\cite{beimel2008distributed,chan2012optimal} and $\tilde{\Theta}(1/\eps)$~\cite{cheu2019distributed} respectively, where the $\tilde{\Theta}(\cdot)$ hides poly-logarithmic terms. Similar errors hold for the \textit{private summation} problem which approximates the sum of real values in $[0,1]$.

\subsection{Our contributions}
To the best of our knowledge, our work is the first to consider the MAB problem under the shuffle model of differential privacy. In order to support the online nature of the MAB problem, we consider a variant of the shuffle model. As opposed to the classical shuffle model, in which the shuffle size is unbounded and the mechanism runs only once, we continuously run shuffle mechanisms for many disjoint batches of users who can only afford a single round of communication.\footnote{Beimel et al.~\cite{beimel2020round} showed that every centralized-DP mechanism can be emulated in the shuffle model in two (communication-intensive) rounds of communication, however these results are not applicable to our setting 
since we assume that each online user participates only in one shuffle  and
then disappears.} 

We consider the paradigm where the server controls the different users who are cooperative and communicate only with the server. We give a rigorous definition of \textit{shuffle differential privacy (SDP)} for the multi-armed bandit problem (assuming binary rewards), and give and prove the first two such algorithms. Our algorithms \textit{Shuffle Differentially Private Arm Elimination (\SDPAE)} and \textit{Variable Batch Shuffle Differentially Private Arm Elimination (\VBSDPAE)} are both based on the well-known \textit{arm elimination (\AE)} algorithm, using consecutive batches of users and together with an SDP private binary summation mechanism.

We show that the simpler but weaker \SDPAE achieves a distribution-dependent regret of 
$O\left(\left(\sum_{a\in [k]:\Delta_a>0}\frac{\log T}{\Delta_a}\right)+\frac{k\log\frac{1}{\delta}}{\eps^2}\right)$, and a distribution-independent regret of $O\left(\sqrt{kT\log T}+\frac{k\log\frac{1}{\delta}}{\eps^2}\right)$.\footnote{In this paper, we assume that $T$ is known apriori. Otherwise, we can apply standard doubling arguments to get roughly the same results.}

We then describe \VBSDPAE, a generalization of \SDPAE to exponentially growing batch sizes, and prove it has a distribution-dependent regret of $O\left(\left(\sum_{a\in [k]:\Delta_a>0}\frac{\log T}{\Delta_a}\right)+\frac{k\sqrt{\log\frac{1}{\delta}}\log T}{\eps}\right)$, and a distribution-independent regret of $O\left(\sqrt{kT\log T}+\frac{k\sqrt{\log\frac{1}{\delta}}\log T}{\eps}\right)$. 

Note that, compared to the local model (Ren et al~\cite{ren2020multi}), the regret of
both \SDPAE and \VBSDPAE is improved, by having the dependency on $1/\eps$ be additive rather than multiplicative.
%
In addition, \VBSDPAE almost matches the regret of the best known algorithms for the centralized model, that is the distribution-dependent regret of $O\left(\left(\sum_{a\in [k]:\Delta_a>0} \frac{\log T}{\Delta_a}\right)+\frac{k}{\eps}\right)$ of Tossou and Dimitrakakis~\cite{tossou2016algorithms}.


\subsection{Related work}
The differentially private MAB problem has been considered in many previous works~\cite{jain2012differentially,guha2013nearly,mishra2015nearly}. Shi and Shen~\cite{shi2021federated} and Dubey and Pentland~\cite{dubey2020differentially} studied MAB and linear bandits respectively in the federated setting. Zheng et al.~\cite{zheng2020locally} studied contextual bandits with LDP.
Batched MAB with a predetermined number of batches was studied in~\cite{esfandiari2019regret,gao2019batched}.

For the private summation problem, Cheu et al.~\cite{cheu2019distributed} gave unbiased $(\eps,\delta)$-SDP mechanisms over binary inputs and real inputs in $[0,1]$, with error roughly $\sqrt{\log ({1}/{\delta}})/\eps$ and $\log ({1}/{\delta})/\eps$, respectively. In several works of Balle et al.~\cite{balle2019privacy,balle2020private,balle2019differentially}, they gave a biased $(\eps,\delta)$-SDP mechanism for real inputs in $[0,1]$ with similar error and a constant number of messages, and a single-message $(\eps,\delta)$-SDP mechanism with optimal error.

\section{Background and preliminaries}\label{sec:prelim}

\subsection{Shuffle-model privacy}
In the well-studied setting of shuffle model privacy, there are $m$ users, each with data $x_i \in X$. Each user applies some encoder $E : X \to Y^*$ to their data and sends the messages $(y_{i,1},...,y_{i,p}) = E(x_i)$ to a shuffler $S : Y^* \to Y^*$. The shuffler then shuffles all the messages $y_{i,j}$ from all the users, and outputs them in a uniformly random order to an analyzer $A : Y^* \to Z$ to estimate some function $f(x_1,..., x_m)$. Thus, the mechanism $M$ consists of the tuple $(E, S,A)$. We say that such a mechanism $M$ is \textit{$(\eps, \delta)$-shuffle differentially private} (or \textit{$(\eps, \delta)$-SDP} for short) if the output of $S$ is $(\eps, \delta)$-differentially private, or more formally:
A mechanism $M=(E, S,A)$ is $(\eps,\delta)$-SDP if for any pair of inputs $\{x_i\}_{i=1}^{m}$ and $\{x_i'\}_{i=1}^{m}$ which differ in at most one value, we have for all $B\subseteq Y^*$:
$$P(S(\cup_{i=1}^{m}E(x_i))\in B)\leq e^\eps\cdot P(S(\cup_{i=1}^{m}E(x_i'))\in B)+\delta.$$

In the mechanism used in this paper, $E$ outputs the user's reward bit together with a set of random bits, and $A$ sums all these bits and debiases the result to get an unbiased estimate of the sum of the users' rewards.



\subsection{Shuffle-model MAB}
Algorithms which are private in the shuffle model 
typically apply the mechanism $M$ once over a set of $m$ users. Here we study the MAB problem which is an online problem, often deployed in real-world applications and with users which are end-devices such as cellphones with a possibly limited or unreliable internet connection. Hence, to adapt the MAB problem to the shuffle model, we batch sequences of consecutive users, and assume that each user can afford a single round of communication, and is never selected more than once.

\paragraph{Model and objective}
The \textit{shuffle-model MAB} setting involves repeating the following process until the $T$'th player pulls its arm:
\begin{enumerate}
    \item The server selects a batch size $m$,
    a \textit{batch} of $m$ random fresh new users, and an $m$-user single-round SDP mechanism $M$.\footnote{Note that always selecting $m=1$ reduces this setting to the Local model MAB.}
    It then picks  an arm
 $a\in [k]$ that all $m$ users of the batch  pull. (For concreteness think that the server picks an ad $a\in [k]$ and sends it to the a random batch of $m$ users.)
    \item 
    Each user $i$ determines binary reward from pulling the arm $a$.\footnote{For simplicity, we assume that the rewards are binary. However, our algorithms and proofs naturally extend to the real $[0,1]$ reward setting, by replacing our private binary summation mechanism (defined later, see Appendix~\ref{sec:privateSummation}) with a private summation mechanism for real numbers in $[0,1]$ with similar guarantees.}
(For concreteness think 
that each user decides whether to click on the ad or not. This defines the reward related to user $i$, which is $r_i=1$ if it clicks the ad and 
    $r_i=0$ otherwise.)
  
    Since our $m$ users are random, these rewards are a sample of $m$ independent rewards from 
    the distribution $R_a$ associated with arm $a$.
    \item The server computes $M(\{r_i\}_{i\in batch})$ using the rewards $r_i$.    
\end{enumerate}

The objective is to minimize the (pseudo) regret, which is the expected difference between the sum of the rewards accumulated (over all the users) by the algorithm and the sum of the rewards of the optimal algorithm that apriori knows an arm with the largest expected reward $a^*$. Let $\mu_a=\expect{}{R_a}$ be the \textit{expected reward} (or simply \textit{mean}) of the arm $a$, let $\mu^*$ denote the expected reward of $a^*$, let $\Delta_a = \mu^* - \mu_a$ be the (suboptimality) gap of the arm $a$ which quantifies the gap between its expected reward and that of $a^*$, and let $N_a$ be the random variable which counts the total number of times the arm $a$ was pulled during the run of the algorithm. 

Formally, the (pseudo) regret of the algorithm for $T\in \naturals$ users is defined to be 
$$R(T) = T\cdot \mu^* - \expect{}{\sum_{i=1}^{T} r_{i}} =\expect{}{\sum_{a\in [k]}N_a \Delta_{a}}.$$

\paragraph{Privacy}
Since the private data of each user is its reward, the appropriate adaptation of shuffle model privacy for the multi-armed bandit problem is as follows.
An algorithm for the multi-armed bandits problem is \textit{$(\eps,\delta)$-shuffle differentially private (SDP)} if for any batch of users, the shuffle mechanism that we apply over them in step 3 is $(\eps, \delta)$-SDP with respect to the rewards of the users (as we recall, the reward of a user -- whether it clicked on an ad or not -- depends on its private features). Formally, for every batch we run a shuffle mechanism where the $m$ users are the users of the current batch, and the data $x_i$ of each user is its reward $r_i$, and we require that each such mechanism is $(\eps, \delta)$-SDP.


\subsection{Concentration bounds} 
We use the following standard definitions of Sub-Gaussian random variables and Hoeffding's inequality.
\begin{definition}[Sub-Gaussian random variable]\label{def:subgaussian}
A random variable $X$ with mean $\mu$ is called \textit{sub-Gaussian with variance $\sigma^2$}, i.e., $X\sim SG(\sigma^2)$ if:
$$\forall \lambda\in \reals,~\expect{}{\exp(\lambda(X-\mu))}\leq e^{\lambda^2\sigma^2/2}.$$
\end{definition}
An equivalent definition shows that if $\forall t>0,\max\left({P(X-\mu\geq t),P(X-\mu\leq -t)}\right)\leq \exp\left(\frac{-t^2}{2\sigma^2}\right)$, then $X$ is sub-Gaussian with variance $\sigma^2$ (up to constant factor).
It is well known that if $X_i\sim SG(\sigma_i^2)$ are independent random variables for $i=1,\ldots,n$, then $\sum_{i=1}^{n}X_n \sim SG(\sum_{i=1}^{n}\sigma_i^2)$ and for any $a,b>0$, $a\cdot X_1+b\sim SG(a^2\cdot \sigma_1^2)$. A bounded random variable $X\in[a,b]$ is $SG((b-a)^2/4)$.

Sub-Gaussian random variables satisfy the following concentration bound,
\begin{lemma}[Hoeffding's inequality~\cite{hoeffding1994probability}]\label{lma:hoeffding}
    Let $\{X_i\}_{i=1}^n$ be independent $SG(\sigma^2)$ random variables,
    then
    $P\left(\abs{\sum_{i=1}^n X_i-\expect{}{\sum_{i=1}^n X_i}}\geq t\right)\leq 2\exp\left(\frac{-t^2}{2n\sigma^2}\right).$
\end{lemma}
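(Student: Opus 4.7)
The plan is to prove this by the standard Chernoff-style argument, exploiting the sub-Gaussian moment generating function (MGF) bound given in Definition~\ref{def:subgaussian}. First I would center the variables: define $Y_i = X_i - \expect{}{X_i}$, so the $Y_i$ are independent, mean-zero, and $SG(\sigma^2)$ (sub-Gaussianity is translation-invariant, which follows directly from the MGF definition). The statement then reduces to bounding $P\left(\abs{\sum_{i=1}^n Y_i} \geq t\right)$ by $2\exp(-t^2/(2n\sigma^2))$.

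Next I would handle one tail at a time. For the upper tail, apply Markov's inequality to $e^{\lambda \sum_i Y_i}$ for an arbitrary $\lambda > 0$:
\begin{equation*}
P\left(\sum_{i=1}^n Y_i \geq t\right) = P\left(e^{\lambda \sum_i Y_i} \geq e^{\lambda t}\right) \leq e^{-\lambda t}\,\expect{}{e^{\lambda \sum_i Y_i}}.
\end{equation*}
By independence the MGF factorizes, and the sub-Gaussian assumption on each $Y_i$ gives $\expect{}{e^{\lambda Y_i}} \leq e^{\lambda^2 \sigma^2/2}$, so
\begin{equation*}
P\left(\sum_{i=1}^n Y_i \geq t\right) \leq e^{-\lambda t}\prod_{i=1}^n e^{\lambda^2 \sigma^2/2} = \exp\!\left(\tfrac{n\lambda^2 \sigma^2}{2} - \lambda t\right).
\end{equation*}

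Then I would optimize over $\lambda > 0$: the exponent is minimized at $\lambda^* = t/(n\sigma^2)$, yielding the bound $\exp(-t^2/(2n\sigma^2))$. The lower tail is symmetric: applying the identical argument to $-Y_i$ (which is also $SG(\sigma^2)$ since the MGF bound is symmetric in $\lambda$) gives the same estimate for $P(\sum_i Y_i \leq -t)$. A union bound over the two tails produces the factor of $2$ and completes the proof. There is no real obstacle here beyond correctly invoking the MGF definition and the multiplicativity of MGFs under independence; the only subtle bookkeeping is verifying that centering and negation both preserve the $SG(\sigma^2)$ property, which is immediate from Definition~\ref{def:subgaussian}.
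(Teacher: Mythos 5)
Your proof is correct and is the standard Chernoff--MGF argument for sub-Gaussian concentration; the paper itself states this lemma as a cited classical result without supplying a proof, so there is nothing to diverge from. All the steps check out: centering and negation preserve $SG(\sigma^2)$ by Definition~\ref{def:subgaussian}, the MGF factorizes by independence, the optimal choice $\lambda^* = t/(n\sigma^2)$ gives the exponent $-t^2/(2n\sigma^2)$, and the union bound over the two tails yields the factor of $2$.
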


\section{Differentially private MAB in the shuffle model}
In this paper, we use the shuffle model to give a private solution to the multi-armed bandit problem, attaining similar privacy guarantees to that of the local privacy model (LDP), without sacrificing the regret. That is, our algorithm almost matches the best known regret in the centralized model of differential privacy. 
Our algorithms rely on the fact that  algorithms for the multi-armed bandit
problem take decisions based on 
sums of rewards received from the users. Hence, we rely on a particularly efficient and accurate mechanism for private binary summation in the shuffle model as a building block in our algorithms. Specifically, for any $\eps,\delta\in (0,1)$, let $M_{sum}$ be a private binary summation mechanism, which for any number of users (batch size) $m\in \naturals$, is $(\eps,\delta)$-SDP, unbiased, and has an error distribution which is independent of the input, and is sub-Gaussian with variance $\sigma_{\eps,\delta}^2=O\left(\frac{\log \frac{1}{\delta}}{\eps^2}\right)$.\footnote{Note that our methods and algorithms should work similarly if they were built over other common algorithms for the MAB problem in which batching makes sense, such as the \UCB algorithm.} Note that our notation of $M_{sum}$ does not include $\eps$ and $\delta$ which will always be clear from context or inherited from the algorithm which runs $M_{sum}$.
The challenge is to combine $M_{sum}$ with the well studied \textit{arm elimination (\AE)} MAB algorithm to get an SDP algorithm for the MAB problem with almost optimal regret.\footnote{For completeness, in Appendix~\ref{sec:privateSummation} we give a complete description and a proof of such a mechanism $M_{sum}$.}

\subsection{\SDPAE: Shuffle Differentially Private Arm Elimination}
\label{sec:sdpMAB}

We base our algorithm on the (non-private) \textit{arm elimination (\AE)} algorithm for the MAB problem, which informally maintains a set of viable arms (initially set to be $[k]$), and each phase pulls the set of viable arms sequentially. Once a phase ends, we search for arms which are noticeably suboptimal in comparison to some other arm, and we eliminate them from the set of viable arms.

To adapt \AE to the shuffle model, in each phase $t$ each arm is pulled not once, but rather by a whole batch of users. Once all the users in the batch pull the arm and receive their reward, we apply the private binary summation mechanism $M_{sum}$ to the batch's rewards, which gives the server an unbiased but noisy estimate of the sum of rewards in the batch. This estimate has two sources of error, which we account for when we compute the upper confidence bound -- the empirical error due to sampling the reward function of the arm, and the error due to the private binary summation mechanism $M_{sum}$.

\subsubsection{Algorithm outline}
In the algorithm below which we call \textit{Shuffle Differentially Private Arm Elimination (\SDPAE)}, we update the estimate of the mean reward of each arm $a$ after every batch of users who sample $a$.
The algorithm works in phases, and maintains a set of viable arms initially set to be $[k]$. In each phase, for every viable arm we have a single batch of users sampling it.
At the end of phase $t$, we denote by $\hat{S}_a^t$  the noisy estimate of the cumulative sum of the rewards from all previous  samples of $a$ in the phases $1,\ldots,t$, and denote by $N_a^t$ the total number of previous samples of arm $a$ in the phases $1,\ldots,t$. The natural estimate for the mean reward of the arm $a$ (denoted by $\hat{\mu}_a^t$) is therefore $\hat{\mu}_a^t=\hat{S}_a^t/N_a^t$. We then calculate the upper and lower confidence bounds $UCB_a^t$ and $LCB_a^t$ respectively of each viable arm $a$ after each phase $t$ using a specific bound which takes into account both sources of error. We finally eliminate any remaining arm with an upper confidence bound which is strictly smaller than the lower confidence bound of some other arm. Algorithm~\ref{alg:batchae} consolidates the algorithm presentation above.

\medskip

\begin{algorithm}[H]\label{alg:batchae}
\SetAlgoLined
\textbf{Input:} privacy parameters $\eps$ and $\delta$, batch size $m$ and horizon $T$.\\
\textbf{Initialize:} $\hat{S}_a^0=0$, $N_a^0=0$ and $\hat{\mu}_{a}^0=0$ for every $a\in [k]$\;
Let $\sigma_{\eps,\delta}^2=O\left(\frac{\log \frac{1}{\delta}}{\eps^2}\right)$ be the sub-Gaussian variance of the error distribution of $M_{sum}$\;
Let $V=[k]$ denote the set of viable arms\;
 \For{phase $t\gets1,2,\ldots$}{
    \For{arm $a\in V$}{
        \For{each new user $i\gets1$ \KwTo $m$}{
            User $i$ pulls the arm $a$ and observes reward $r_{a,i}^t$\;
            If total arm samples in current algorithm run is $T$, exit\;
        }
        \textbf{Communication:} Perform private binary summation 
        $Z_a^t \gets M_{sum}\left(\{r_{a,i}^t\}_{i=1}^{m}\right)$\;
        \textbf{Server update:} Update $\hat{S}_{a}^{t} \gets \hat{S}_{a}^{t-1} + Z_a^t $,
        $N_{a}^{t} \gets N_{a}^{t-1} + m$, and finally $\hat{\mu}_{a}^{t} \gets \hat{S}_{a}^{t}/N_{a}^{t}$\;
    }
   \textbf{Confidence bounds:} For each arm $a$, calculate $I_a^t\gets\left(\frac{2\sqrt{t}\sigma_{\eps,\delta}}{N_a^t}+\frac{1}{\sqrt{N_a^t}}\right)\cdot\sqrt{2\log T}$, and the upper and lower confidence bounds $UCB_a^t \gets \hat{\mu}_{a}^{t}+I_a^t$ and $LCB_a^t \gets \hat{\mu}_{a}^{t}-I_a^t$\;
   \textbf{Elimination:} remove all arms $a$ from $V$ such that $UCB_a^t < \max_{a'\in S}LCB_{a'}^t$\;
 }
 \caption{\SDPAE (Shuffle Differentially Private Arm Elimination)}
\end{algorithm}
\subsubsection{Analysis}
The privacy is trivial, since each batch we use the $(\eps,\delta)$-SDP mechanism $M_{sum}$. We now focus on regret.

Theorem~\ref{thm:batchAERegret} gives a bound on the regret of \SDPAE as a function of the batch size $m$. We follow a somewhat standard regret bound analysis for arm elimination, comprising two parts. The first part uses Hoeffding's inequality to derive a high probability bound on $\abs{\hat{\mu}_a^t-\mu_a}$, the error between the empirical average reward and the true mean reward of the arm $a$ at a given phase $t$. The second part uses this bound to bound the expected number of times we sample each suboptimal arm $a$, and summing over all suboptimal arms we get a bound on the regret.

\begin{theorem}\label{thm:batchAERegret}
    The algorithm \SDPAE is $(\eps,\delta)$-SDP, and has a distribution-dependent regret of $O\left(\sum_{a\in [k]:\Delta_a>0}\left(\frac{\log T}{\Delta_a}+\frac{\sigma_{\eps,\delta}^2\log T}{m\Delta_a}+m\Delta_a\right)\right)$, and a distribution-independent regret of $O\left(\sqrt{\left(1+\frac{\sigma_{\eps,\delta}^2}{m}\right)kT\log T}+mk\right)$.
\end{theorem}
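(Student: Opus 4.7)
The plan is as follows. Privacy is immediate: each batch invokes the $(\eps,\delta)$-SDP mechanism $M_{sum}$ on a disjoint set of users, so the algorithm is $(\eps,\delta)$-SDP with respect to every user's reward. For regret, the strategy is a high-probability clean event followed by the standard arm-elimination counting argument.

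For the clean event, I would establish that with probability at least $1 - O(k/T^3)$, $|\hat{\mu}_a^t - \mu_a| \leq I_a^t$ for every arm $a\in[k]$ and every phase $t \leq T$. Note that $\hat{\mu}_a^t - \mu_a$ decomposes into (i) the empirical-mean error of $N_a^t$ i.i.d.\ Bernoulli samples, each $SG(1/4)$; and (ii) $1/N_a^t$ times the sum of $t$ independent $SG(\sigma_{\eps,\delta}^2)$ privatization noises from the $t$ prior applications of $M_{sum}$. By Lemma~\ref{lma:hoeffding} applied to each part with the appropriate sub-Gaussian variance, each is bounded in absolute value by $\sqrt{2\log T/N_a^t}$ and $2\sqrt{t}\sigma_{\eps,\delta}\sqrt{2\log T}/N_a^t$ respectively, with failure probability $O(1/T^4)$ each --- exactly the two summands of $I_a^t$. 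A union bound over all $O(kT)$ pairs $(a,t)$ completes the argument, and the complement of the clean event contributes only $O(1)$ to the expected regret.

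Conditioned on the clean event, the optimal arm $a^*$ is never eliminated, since for every surviving $a$, $UCB_{a^*}^t \geq \mu^* \geq \mu_a \geq LCB_a^t$. For a suboptimal arm $a$, since all viable arms share the same count $N_a^t = mt$ at the end of phase $t$ and hence the same confidence half-width $I_t$, a sufficient condition for $a$ to be eliminated by phase $t$ is $\Delta_a > 4 I_t$. Plugging in $I_t = (2\sqrt{t}\sigma_{\eps,\delta} + \sqrt{mt})\sqrt{2\log T}/(mt)$, this holds once $t \geq t_a := O\!\left(\frac{\log T}{m\Delta_a^2} + \frac{\sigma_{\eps,\delta}^2\log T}{m^2 \Delta_a^2}\right)$. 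Hence $N_a \leq m(t_a + 1)$, giving per-arm regret $N_a\Delta_a = O\!\left(\frac{\log T}{\Delta_a} + \frac{\sigma_{\eps,\delta}^2 \log T}{m\Delta_a} + m\Delta_a\right)$; summing over suboptimal arms yields the distribution-dependent bound.

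For the distribution-independent bound, I would apply the standard gap-partitioning trick: fix a threshold $\Delta_0$ and split the arms into those with $\Delta_a \leq \Delta_0$ (contributing at most $T\Delta_0$ in aggregate) and those with $\Delta_a > \Delta_0$ (whose per-arm learning regret is $O((1+\sigma_{\eps,\delta}^2/m)\log T / \Delta_0)$, plus the $m\Delta_a \leq m$ last-batch term contributing $mk$ in total). Choosing $\Delta_0 = \sqrt{(1+\sigma_{\eps,\delta}^2/m)\,k\log T/T}$ balances the two main terms and produces the claimed $O(\sqrt{(1+\sigma_{\eps,\delta}^2/m)kT\log T} + mk)$ bound. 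I expect the main technical subtlety to be the bookkeeping in the clean event: one must calibrate the constants in $I_a^t$ so that a single union bound simultaneously controls both the sampling error and the accumulated privatization noise across all $kT$ pairs without the tail probability inflating either regret bound; every other step is a mechanical instantiation of the arm-elimination template with the two-source confidence width.
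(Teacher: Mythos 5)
Your proposal is correct and follows essentially the same route as the paper's proof: the same two-term decomposition of $\hat{\mu}_a^t-\mu_a$ into sampling error ($SG(1/4)$ per sample) and accumulated $M_{sum}$ noise ($SG(\sigma_{\eps,\delta}^2)$ per batch), the same Hoeffding-plus-union-bound clean event matching the two summands of $I_a^t$, the same $\Delta_a\le 4I_a^{t_0}$ elimination argument yielding $N_a=O(\log T/\Delta_a^2+\sigma_{\eps,\delta}^2\log T/(m\Delta_a^2)+m)$, and the same gap-partitioning threshold for the distribution-independent bound. The only presentational difference is that the paper makes the handling of adaptivity explicit via pre-sampled ``tapes'' of rewards and noise, which your union bound over all $(a,t)$ pairs accomplishes implicitly.
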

\begin{proof}
First, note that for any arm $a$, and phase $t$, we have
$\hat{\mu}_a^t=\frac{\sum_{s=1}^{t}M_{sum}\left(\{r_{a,i}^s\}_{i=1}^{m}\right)}{N_a^t}$, where 
$N_a^t=m\cdot t$.

We define the \textit{clean event} $C:=\left\{\forall a\in[k],\forall t\in[T]~~\abs{\hat{\mu}^t_a-\mu_a}\leq I_a^t\right\},$ where $I_a^t:=\left(\frac{2\sqrt{t}\sigma_{\eps,\delta}}{N_a^t}+\frac{1}{\sqrt{N_a^t}}\right)\cdot \sqrt{2\log T}$ is a confidence bound interval. We now show that the clean event $C$ occurs with high probability, that is $P\left(C\right)\geq1-4T^{-2}$, and after that assume the event $C$ to simplify our analysis.

Indeed, for each arm $a$, we imagine both a reward tape of length $1\times T$, with each cell independently sampled from the distribution $D_a$ of arm $a$, and a private-binary-summation-error tape of length $1\times T$, with each cell independently sampled from the distribution of (the additive) error of the private binary summation mechanism $M_{sum}$ for $m$ users.

We assume that in the $j$'th time a given arm $a$ is pulled by the algorithm, its reward is taken from the $j$'th cell in this arm's reward tape, and similarly the $j$'th time we compute a private binary sum over rewards of a batch of users who pulled $a$, the (additive) error is taken from the $j$'th cell in the arm's private-binary-summation-error tape.\footnote{Here we rely on the fact that the distribution of the error of $M_{sum}$ is independent of the input.}$^,$\footnote{Note that sizes of both tapes have been chosen conservatively to be of size $T$. We never pass the end of any these tapes, since there are at most $T$ users in total, and at most $T$ batches (actually roughly $T/m$ in this case), and we may not use them all.}

Let $t\in [T]$ and $a\in [k]$, and let $\hat{v}_a^t$ be the approximated reward of arm $a$ that the algorithm would have held at the end of phase $t$ using the concrete values in the tapes defined above, 
that is $ \hat{v}^t_a =
\frac{\sum_{s=1}^{t}d_s+\sum_{i=1}^{N_a^t}e_{i}}{N_a^t}$, where $d_{s}$ is the $s$'th cell of the private-binary-summation-error tape of $a$, and the $\{e_{i}\}_{i=1}^{N_a^t}$ are the total $N_a^t=m\cdot t$ cells of the reward tape of $a$ that we have used until the end of the $t$'th phase.

Our aim is to bound the term $\abs{\hat{v}^t_a-\mu_a}=\frac{\sum_{s=1}^{t}d_s+\sum_{i=1}^{N_a^t}(e_{i}-\mu_a)}{N_a^t}$. We first bound the first sum in the nominator, then bound the second sum in the nominator, and finally combine the bounds to get a bound for $\abs{\hat{v}^t_a-\mu_a}$.

To bound the $d_{s}$'s sum, we apply Hoeffding's inequality (Lemma~\ref{lma:hoeffding}) for a sum of $n\gets t$ random variables which are sub-Gaussian with variance $\sigma_{\eps,\delta}^2$ and have zero mean (since $M_{sum}$ is unbiased) to get,
\begin{align}
   P\left(\abs{\sum_{s=1}^{t} d_{s}}\leq 2\sigma_{\eps,\delta}\cdot \sqrt{2t\log T}\right)
    &\geq 1-2\exp\left(-\left(4\sigma_{\eps,\delta}^2\cdot 2t\log T\right)/\left(2t\sigma_{\eps,\delta}^2\right) \right)\nonumber\\
    &= 1-2T^{-4}.\label{eq:dsbound}
\end{align}

To bound the $f_i=(e_{i}-\mu_a)$'s sum, we apply Hoeffding's inequality (Lemma~\ref{lma:hoeffding}) for the sum of $n\gets N_a^t$ random variables $f_i$, each sub-Gaussian with variance 1/4 (since it is bounded in the interval $[-\mu_a,1-\mu_a]$ of size 1), and with zero mean (since by its definition $\expect{}{e_i-\mu_a}=\expect{}{e_i}-\mu_a=0$), to get that 
\begin{align}
   P\left(\abs{\sum_{i=1}^{N_a^t}(e_{i}-\mu_a)}\leq \sqrt{2N_a^{t}\log T}\right)
    &\geq1- 2\exp\left(-(2N_a^t\log T)/(2N_a^t/4)\right)=1-2T^{-4}.\label{eq:eibound}
\end{align}

Applying a union bound and the triangle inequality on Equation~\eqref{eq:dsbound} and Equation~\eqref{eq:eibound} gives that
$$P\left(\abs{\sum_{s=1}^{t} d_{s}+\sum_{i=1}^{N_a^t}(e_{i}-\mu_a)}\leq \left(2\sqrt{t}\sigma_{\eps,\delta}+\sqrt{N_a^t}\right)\cdot \sqrt{2\log T}\right)\geq1-4T^{-4},$$
which by the definition of $\hat{v}^t_a$ and $I_a^t$ means that
\begin{align}
P\left(\abs{\hat{v}^t_a-\mu_a}\leq 
I_a^t
\right)\geq1-4T^{-4}.\label{eq:VaMuaBound}
\end{align}

Since in the analysis above $t$ and $a$ are arbitrary, Equation~\eqref{eq:VaMuaBound} holds for every $t\in [T]$ and $a\in [k]$. Thus, we take a union bound over all arms $a\in [k]$ (assuming $k\leq T$) and all $t\in [T]$, to conclude that 
\begin{align}
P\left(\forall a\in [k], \forall t\in [T]~\abs{\hat{v}^t_a-\mu_a}\leq
I_a^t
\right)\geq1-4T^{-2}.\label{eq:allBound}
\end{align}
Since the event in the probability above in Equation~\eqref{eq:allBound} is precisely the event that $C$ holds for a run of the algorithm using the randomness in the tapes as defined above (by the definitions of $\hat{v}^t_a$ and $\hat{\mu}^t_a$), we get that $$P\left(C\right)\geq1-4T^{-2}.$$  

For the regret analysis, we assume the clean event $C$. Consider a suboptimal arm $a$ such that $\Delta_a=\mu^*-\mu_a>0$, and consider the last phase $t_0$ following which we did not remove the arm $a$ yet (or the last phase if $a$ remains active to the end). Since we assumed the clean event, an optimal arm $a^*$ cannot be disqualified, and since $a$ is not yet disqualified, the confidence intervals of the arms $a$ and $a^*$ at the end of the $t_0$'s phase must overlap. Therefore, 
\begin{align}
\Delta_a=\mu^*-\mu_a\leq 2(I_a^{t_0}+I_{a^*}^{t_0}) =4I_a^{t_0}=\left(\frac{8\sqrt{t}\sigma_{\eps,\delta}}{N_a^{t_0}}+\frac{4}{\sqrt{N_a^t}}\right)\cdot \sqrt{2\log T},\label{eq:deltaaBound}
\end{align}
where the third step follows since $a$ and $a^*$ were sampled using identical batch sizes throughout the algorithm, so at the end of the ${t_0}$'th phase, $N_a^{t_0}=N_{a^*}^{t_0}$ and therefore $I_a^{t_0}=I_{a^*}^{t_0}$, and the last step follows by the definition of $I_a^{t_0}$.

Observe that if $N_a^{t_0}> \frac{128\log T}{\Delta_a^2}$ then $\frac{4\cdot \sqrt{2\log T}}{\sqrt{N_a^{t_0}}}<\frac{\Delta_a}{2}$, and if $N_a^{t_0}>\frac{(16\sigma_{\eps,\delta}\cdot \sqrt{2\log T})^2}{m\Delta_a^2}$ then $\frac{8\sqrt{{t_0}}\sigma_{\eps,\delta}\cdot \sqrt{2\log T}}{N_a^{t_0}}=\frac{8\sqrt{N_a^{t_0}/m}\sigma_{\eps,\delta}\cdot \sqrt{2\log T}}{N_a^{t_0}}=\frac{8\sigma_{\eps,\delta}\cdot \sqrt{2\log T}}{\sqrt{mN_a^{t_0}}}<8\sigma_{\eps,\delta}\cdot \sqrt{2\log T}\cdot \frac{\Delta_a}{16\sigma_{\eps,\delta}\cdot \sqrt{2\log T}}=\frac{\Delta_a}{2}$, so their sum is $<\Delta_a$ in contradiction to Equation~\eqref{eq:deltaaBound}. Hence, $N_a^{t_0}\leq \max\left( \frac{128\log T}{\Delta_a^2},\frac{512\sigma_{\eps,\delta}^2\cdot \log T}{m\Delta_a^2}\right)$.
Therefore the total regret on arm $a$ is
\begin{align}
R_a&\leq \Delta_a\cdot (N_a^{t_0}+m)\leq \max\left( \frac{128\log T}{\Delta_a},\frac{512\sigma_{\eps,\delta}^2\cdot \log T}{m\Delta_a}\right)+m\Delta_a\nonumber\\
&\leq \frac{128\log T}{\Delta_a}+\frac{512\sigma_{\eps,\delta}^2\cdot \log T}{m\Delta_a}+m\Delta_a,\label{eq:RaBound}
\end{align}
where the first step follows since since the arm $a$ is eliminated following phase ${t_0}+1$ (or if $t_0$ is the last phase, then we finish and don't sample $a$ after it) of batch size $m$ and is subsequently never pulled, and the second step follows by previous bound on $N_a^{t_0}$ and since $N_a^{{t_0}+1}=N_a^{t_0}+m$.
We sum up the regret over all arms, to obtain a bound for the total regret denoted by $R$:

$$R=\sum_{a\in [k]:\Delta_a>0}R_a\leq \sum_{a\in [k]:\Delta_a>0}\left(\frac{128\log T}{\Delta_a}+\frac{512\sigma_{\eps,\delta}^2\cdot \log T}{m\Delta_a}+m\Delta_a\right).$$

To complete the analysis, we argue that the bad event in which $C$ does not hold contributes a negligible amount to the expected regret $R(T)$. Indeed,

\begin{align}
    R(T)&=\expect{}{R\mid C}\cdot P(C)+\expect{}{R \mid \bar{C}}\cdot P(\bar{C})\nonumber\\
    &\leq \sum_{a\in [k]:\Delta_a>0}\left(\frac{128\log T}{\Delta_a}+\frac{512\sigma_{\eps,\delta}^2\cdot \log T}{m\Delta_a}+m\Delta_a\right)+T\cdot4T^{-2}\nonumber\\
    &= O\left(\sum_{a\in [k]:\Delta_a>0}\left(\frac{\log T}{\Delta_a}+\frac{\sigma_{\eps,\delta}^2\log T}{m\Delta_a}+m\Delta_a\right)\right),\label{eq:expectedRegret}
\end{align}
where the first step follows by the law of total expectation, and the second step follows since the regret is at most $T$, and by the previous bound on $P(C)$.

Now for the distribution-independent bound, assume the clean even $C$, and let $\gamma>0$ be a threshold whose exact value we will set later. We group the arms $a$ based on if $\Delta_a<\gamma$ or not, to get
\begin{align*}
    R&=\sum_{a\in [k]\mid \Delta_a<\gamma} R_a
+\sum_{a\in [k]\mid \Delta_a\geq\gamma}R_a\\
    &\leq T\cdot \gamma
+\sum_{a\in [k] \mid \Delta_a\geq\gamma}\left(\frac{128\log T}{\Delta_a}+\frac{512\sigma_{\eps,\delta}^2\cdot \log T}{m\Delta_a}+m\Delta_a\right)\\
    &\leq T\cdot \gamma + \frac{(128+512\sigma_{\eps,\delta}^2/m)k\log T}{\gamma}+mk,
\end{align*}
where the first step follows from splitting the regret from before to two sums, the second step follows since in the first sum $\Delta_a<\gamma$ and since there are only $T$ samples in total throughout all arms, and in the second sum we apply Equation~\eqref{eq:RaBound}, and the final step follows since there are $k$ arms in total and since the elements in the sum satisfy $\Delta_a\in [\gamma,1]$.

We balance the first two terms by defining $\gamma$ to be $\gamma = \sqrt{\frac{(128+512\sigma_{\eps,\delta}^2/m)k\log T}{T}}$, so the total regret is:
$R\leq 2\sqrt{(128+512\sigma_{\eps,\delta}^2/m)k\cdot T\log T}+mk $.
By a similar argument to Equation~\eqref{eq:expectedRegret} conditioning on whether or not $C$ occurred, we get that the expected regret $R(T)$ satisfies 
$$R(T)\leq 2\sqrt{(128+512\sigma_{\eps,\delta}^2/m)k\cdot T\log T}+mk+T\cdot 4T^{-2}=O\left(\sqrt{\left(1+\frac{\sigma_{\eps,\delta}^2}{m}\right)kT\log T}+mk\right).$$
\end{proof}

Recall that the private binary summation mechanism $M_{sum}$'s error distribution is sub-Gaussian with variance $\sigma_{\eps,\delta}^2=O\left(\frac{\log \frac{1}{\delta}}{\eps^2}\right)$. We fix a concrete batch size $m$ in \SDPAE, and apply Theorem~\ref{thm:batchAERegret} to get the following corollary.
\begin{corollary}\label{cor:batchedAESDPRegret}
    \SDPAE with a batch size of $m=\ceil{\sigma_{\eps,\delta}}=\Theta\left(\frac{\log\frac{1}{\delta}}{\eps^2}\right)$ is $(\eps,\delta)$-SDP and has a distribution-dependent regret of $$O\left(\sum_{a\in [k]:\Delta_a>0}\left(\frac{\log T}{\Delta_a}+\frac{\Delta_a\log\frac{1}{\delta}}{\eps^2}\right)\right)=O\left(\left(\sum_{a\in [k]:\Delta_a>0}\frac{\log T}{\Delta_a}\right)+\frac{k\log\frac{1}{\delta}}{\eps^2}\right),$$ and a distribution-independent regret of $O\left(\sqrt{kT\log T}+\frac{k\log\frac{1}{\delta}}{\eps^2}\right)$.
\end{corollary}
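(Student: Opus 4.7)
The plan is to apply Theorem~\ref{thm:batchAERegret} as a black box with the chosen batch size and simplify. The privacy claim is immediate: each batch of \SDPAE invokes only the $(\eps,\delta)$-SDP mechanism $M_{sum}$ on a disjoint new group of users, so the whole algorithm inherits $(\eps,\delta)$-SDP from Theorem~\ref{thm:batchAERegret} regardless of $m$.

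For the distribution-dependent regret, I would substitute the specified batch size $m=\Theta(\log(1/\delta)/\eps^2)=\Theta(\sigma_{\eps,\delta}^2)$ into the bound
$$O\!\left(\sum_{a\in [k]:\Delta_a>0}\left(\frac{\log T}{\Delta_a}+\frac{\sigma_{\eps,\delta}^2\log T}{m\Delta_a}+m\Delta_a\right)\right)$$
from Theorem~\ref{thm:batchAERegret}. Since $\sigma_{\eps,\delta}^2/m=\Theta(1)$, the middle summand collapses into the first one, leaving $\sum_a \log T/\Delta_a$. The third summand contributes $\sum_{a:\Delta_a>0} m\Delta_a \leq mk = O(k\log(1/\delta)/\eps^2)$, using $\Delta_a\leq 1$, which is precisely the claimed additive privacy cost.

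For the distribution-independent bound, I would substitute the same $m$ into $O\!\left(\sqrt{(1+\sigma_{\eps,\delta}^2/m)kT\log T}+mk\right)$ from Theorem~\ref{thm:batchAERegret}. Again $1+\sigma_{\eps,\delta}^2/m=\Theta(1)$ collapses the first term to $O(\sqrt{kT\log T})$, and $mk=O(k\log(1/\delta)/\eps^2)$ yields the additive term.

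There is no real obstacle here beyond justifying the choice of $m$. Viewing $m$ as a tunable parameter in Theorem~\ref{thm:batchAERegret}, one wants to drive the multiplicative factor $\sigma_{\eps,\delta}^2/m$ down to $\Theta(1)$ so that the non-private $\sqrt{kT\log T}$ term is not inflated, while keeping the batching penalty $mk$ as small as possible. The choice $m\asymp\sigma_{\eps,\delta}^2$ is the smallest $m$ that achieves the former, and the resulting additive cost $mk\asymp k\log(1/\delta)/\eps^2$ is the natural centralized-style per-arm noise penalty; any smaller $m$ would reintroduce a multiplicative $1/\eps$-type factor, and any larger $m$ would inflate the additive term. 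This gives exactly the two bounds stated in the corollary.
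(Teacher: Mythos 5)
Your proposal is correct and follows exactly the paper's route: the corollary is obtained by plugging the chosen batch size into Theorem~\ref{thm:batchAERegret} and simplifying, with privacy inherited batchwise from $M_{sum}$. You also rightly read the batch size as $m=\Theta(\sigma_{\eps,\delta}^2)=\Theta\left(\frac{\log\frac{1}{\delta}}{\eps^2}\right)$ --- which is what makes $\sigma_{\eps,\delta}^2/m=\Theta(1)$ and collapses the middle term --- rather than the literal $\ceil{\sigma_{\eps,\delta}}=\Theta\left(\frac{\sqrt{\log\frac{1}{\delta}}}{\eps}\right)$ appearing in the corollary's statement, which would not yield the claimed bounds.
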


\subsection{\VBSDPAE: Variable Batch Shuffle Differentially Private Arm Elimination}\label{sec:optimalAlg}
In this section we modify \SDPAE to give an $(\eps,\delta)$-SDP algorithm for the MAB problem with improved additional regret.
Recall that \SDPAE has a distribution-dependent regret of $O\left(\left(\sum_{a\in [k]:\Delta_a>0}\frac{\log T}{\Delta_a}\right)+\frac{k\log\frac{1}{\delta}}{\eps^2}\right)$, whereas the best known regret for $(\eps,\delta)$ centralized-DP is due to Tossou and Dimitrakakis~\cite{tossou2016algorithms}, and is $O\left(\left(\sum_{a\in [k]:\Delta_a>0} \frac{\log T}{\Delta_a}\right)+\frac{k}{\eps}\right)$. Since any SDP algorithm can be emulated by a centralized-DP mechanism, we can only wish to match the regret of Tossou and Dimitrakakis~\cite{tossou2016algorithms}. Hence, the natural question is: \enquote{Can we reduce the dependence in $\eps$ of the additive (second) regret term from $\frac{1}{\eps^2}$ to $\frac{1}{\eps}$?}

Before presenting our solution, we first indicate two intuitive approaches that fail. The first is to continue using batches of constant size $m=\Omega(1/\eps^2)$. This idea fails since intuitively in the worst case we can expect a first batch regret of $m=\Omega(1/\eps^2)$ which already surpasses $O(1/\eps)$. 
The second is to try to directly adapt the algorithm of Tossou and Dimitrakakis~\cite{tossou2016algorithms}, which can be interpreted as \AE with batches of size $\ceil{1/\eps}$. Unfortunately, this adaptation fails since it requires that the added noise to the empirical mean of each arm decreases with time, whereas in the shuffle model, we must ensure privacy with respect to each batch equally and independently, so the total noise cannot decrease with time.

Now for our improved algorithm, observe that using a large batch size  increases the regret a lot for arms which are very suboptimal. This is since rather than pulling these arms only a few  times until we detect that they are suboptimal, we commit ourselves to pulling them throughout a large batch. On the other hand, fixing the batch size to be small increases the overall estimation error, since every application of the SDP summation algorithm $M_{sum}$ introduces an error which is of the same magnitude  (specifically, the error is sub-Gaussian with variance $\sigma_{\eps,\delta}^2=O\left(\frac{\log \frac{1}{\delta}}{\eps^2}\right)$) 
independently of the batch size. Hence, more executions of $M_{sum}$ translates to more noise due to privacy. We therefore extend \SDPAE to support variable size batches, which start small and gradually increase. Intuitively, the smaller batches initially, allow us to quickly eliminate very suboptimal arms with only a small number of pulls. We gradually increase the batch size to reduce the per-user error introduced by the private binary summation mechanism. 
Specifically, we double the batch size after each phase.




\begin{algorithm}[H]\label{alg:variablebatchae}
\SetAlgoLined
\textbf{Input:} privacy parameters $\eps$ and $\delta$ and horizon $T$.\\
\textbf{Initialize:} $\hat{S}_a^0=0$, $N_a^0=0$ and $\hat{\mu}_{a}^0=0$ for every $a\in [k]$\;
Let $\sigma_{\eps,\delta}^2=O\left(\frac{\log \frac{1}{\delta}}{\eps^2}\right)$ be the sub-Gaussian variance of the error distribution of $M_{sum}$\;
Let $V=[k]$ denote the set of viable arms\;
 \For{phase $t\gets1,2,\ldots$}{
    Let $m^t\gets2^t$\;
    \For{arm $a\in V$}{
        \For{each new user $i\gets1$ \KwTo $m^t$}{
            User $i$ pulls the arm $a$ and observes reward $r_{a,i}^t$\;
            If total number of arm samples  is $T$, exit\;
        }
        \textbf{Communication:} Perform private binary summation 
        $Z_a^t \gets M_{sum}\left(\{r_{a,i}^t\}_{i=1}^{m^t}\right)$\;
        \textbf{Server update:} Update $\hat{S}_{a}^{t} \gets \hat{S}_{a}^{t-1} + Z_a^t $,
        $N_{a}^{t} \gets N_{a}^{t-1} + m^t$, and finally $\hat{\mu}_{a}^{t} \gets \hat{S}_{a}^{t}/N_{a}^{t}$\;
    }
  \textbf{Confidence bounds:} For each arm $a$, calculate $I_a^t\gets\left(\frac{2\sqrt{t}\sigma_{\eps,\delta}}{N_a^t}+\frac{1}{\sqrt{N_a^t}}\right)\cdot\sqrt{2\log T}$, and the upper and lower confidence bounds $UCB_a^t \gets \hat{\mu}_{a}^{t}+I_a^t$ and $LCB_a^t \gets \hat{\mu}_{a}^{t}-I_a^t$\;
   \textbf{Elimination:} remove all arms $a$ from $V$ such that $UCB_a^t < \max_{a'\in S}LCB_{a'}^t$\;
 }
 \caption{\VBSDPAE (Variable Batch Shuffle Differentially Private Arm Elimination)}
\end{algorithm}
\subsubsection{Algorithm outline}
Algorithm~\ref{alg:variablebatchae} above, which we call \textit{Variable Batch Shuffle Differentially Private Arm Elimination (\VBSDPAE)}, consolidates the previous presentation using a different batch size $m^t=2^t$ for each phase $t$.

\subsubsection{Analysis}
The privacy is trivial, since in each batch we use the $(\eps,\delta)$-SDP mechanism $M_{sum}$. We now focus on regret.

To give a regret bound on \VBSDPAE, we follow a similar proof to that of Theorem~\ref{thm:batchAERegret}.

\begin{theorem}\label{thm:variablebatchAERegret}
    The algorithm \VBSDPAE is $(\eps,\delta)$-SDP, and has a distribution-dependent regret of $O\left(\left(\sum_{a\in [k]:\Delta_a>0}\frac{\log T}{\Delta_a}\right)+k\sigma_{\eps,\delta}\log T\right)$, and a distribution-independent regret of $O\left(\sqrt{kT\log T}+k\sigma_{\eps,\delta}\log T\right)$.
\end{theorem}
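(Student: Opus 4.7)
The plan is to mirror the proof of Theorem~\ref{thm:batchAERegret}, exploiting that the batch sizes $m^t = 2^t$ grow geometrically. Privacy is immediate, since each batch's rewards are processed through $M_{sum}$ (which is $(\eps,\delta)$-SDP) and every user participates in exactly one batch. For regret, I would first establish the clean event $C := \{\forall a\in[k],\forall t: |\hat{\mu}_a^t - \mu_a| \leq I_a^t\}$ using the tape-based argument from the previous proof. The two Hoeffding applications are unchanged in form: the privacy-error sum $\sum_{s=1}^{t} d_s$ is $SG(t\sigma_{\eps,\delta}^2)$ (one independent $SG(\sigma_{\eps,\delta}^2)$ term per phase, regardless of batch size), and the reward-deviation sum $\sum_{i=1}^{N_a^t}(e_i-\mu_a)$ is $SG(N_a^t/4)$. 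Since $N_a^t = \sum_{s=1}^{t} 2^s = 2^{t+1}-2 \leq T$, the total number of phases is $O(\log T)$, so a union bound over all $(a,t)$ pairs still gives $P(C) \geq 1 - 4T^{-2}$.

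Conditioning on $C$, I would then fix a suboptimal arm $a$ and let $t_0$ be the last phase after which $a$ has not yet been eliminated. Exactly as in the proof of Theorem~\ref{thm:batchAERegret}, the overlap of the confidence intervals of $a$ and the optimal arm $a^*$ (whose $N_{a^*}^{t_0} = N_a^{t_0}$ since all viable arms share batches) yields
$$\Delta_a \leq 4 I_a^{t_0} = \left(\frac{8\sqrt{t_0}\sigma_{\eps,\delta}}{N_a^{t_0}} + \frac{4}{\sqrt{N_a^{t_0}}}\right)\sqrt{2\log T}.$$
The variable-batch structure enters here: since $N_a^{t_0} \leq T$ we have $t_0 \leq \log_2 T$, so $\sqrt{t_0} = O(\sqrt{\log T})$. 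Splitting into cases by which term of $I_a^{t_0}$ dominates, I either obtain $N_a^{t_0} \leq 128 \log T/\Delta_a^2$ (from the empirical term) or $N_a^{t_0} \Delta_a \leq 16 \sqrt{t_0}\, \sigma_{\eps,\delta} \sqrt{2\log T} = O(\sigma_{\eps,\delta} \log T)$ (from the privacy term). The arm $a$ can be pulled at most once more after phase $t_0$, in a batch of size $m^{t_0+1} = 2^{t_0+1}$, and since $N_a^{t_0} = 2^{t_0+1}-2$, this extra batch at most doubles the pull count: $N_a^{t_0+1} \leq 2 N_a^{t_0} + 2$. Thus the per-arm regret is $R_a \leq \Delta_a N_a^{t_0+1} = O(\log T/\Delta_a) + O(\sigma_{\eps,\delta} \log T)$.

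Summing the per-arm regret over suboptimal arms, and handling the complementary event $\bar C$ as before (contributing at most $T \cdot 4T^{-2} = O(1)$ to the expected regret), yields the claimed distribution-dependent bound. The distribution-independent bound then follows by the same threshold argument as in Theorem~\ref{thm:batchAERegret}: split arms by whether $\Delta_a < \gamma$, apply the per-arm bound above to the remainder, and optimize $\gamma = \Theta(\sqrt{k\log T/T})$ to obtain $O(\sqrt{kT\log T} + k\sigma_{\eps,\delta}\log T)$. I expect the main subtlety to be confirming two facts that are specific to the variable-batch setting: first, that $\sqrt{t_0}$ contributes only an extra $\sqrt{\log T}$ factor, which is precisely what improves the privacy term from $\sigma_{\eps,\delta}^2/\eps$-like scaling down to $\sigma_{\eps,\delta}\log T$; and second, that the ``wasteful'' last batch of size $2^{t_0+1}$ only doubles $N_a^{t_0}$ and therefore changes only a multiplicative constant in $R_a$, rather than dominating. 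Both hinge crucially on the geometric growth of the batch sizes.
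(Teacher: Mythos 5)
Your proposal is correct and follows essentially the same route as the paper's proof: the same clean event and tape argument, the same case split on which term of $I_a^{t_0}$ dominates (yielding $N_a^{t_0}\Delta_a = O(\sigma_{\eps,\delta}\log T)$ in the privacy-dominated case via $t_0 \le \log T$), and the same observation that the final batch of size $2^{t_0+1}$ only multiplies the pull count by a constant. The two subtleties you flag at the end are exactly the two points where the paper's proof diverges from that of Theorem~\ref{thm:batchAERegret}, so nothing is missing.
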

\begin{proof}
We continue identically to the proof of Theorem~\ref{thm:batchAERegret}, except the fact that we need  the $t$'th index of each arm's private-binary-summation-error tape to contain an iid sample of the error of the private binary summation mechanism $M_{sum}$ for $m^t=2^t$ users. By the definition of $M_{sum}$, which is sub-Gaussian with the same variance for any number of users (batch size), the application of Hoeffding inequality as in the proof of Theorem~\ref{thm:batchAERegret} still follows. We conclude that the \textit{clean event} $C:=\left\{\forall a\in[k], \forall t\in[T]~~\abs{\hat{\mu}^t_a-\mu_a}\leq I_a^t\right\},$ where $I_a^t:=\left(\frac{2\sqrt{t}\sigma_{\eps,\delta}}{N_a^t}+\frac{1}{\sqrt{N_a^t}}\right)\cdot \sqrt{2\log T}$ occurs with high probability, that is $P\left(C\right)\geq1-4T^{-2}$.

For the regret analysis, we assume the clean event $C$. Let $a$ be a suboptimal arm, and let $t_0$ be the last phase following which we did not remove the arm $a$ yet (or the last phase if $a$ remains active to the end). As in the proof of Theorem~\ref{thm:batchAERegret}, we get that
\begin{align}
\Delta_a\leq \left(\frac{8\sqrt{t_0}\sigma_{\eps,\delta}}{N_a^{t_0}}+\frac{4}{\sqrt{N_a^{t_0}}}\right)\cdot \sqrt{2\log T}.\label{eq:variabledeltaaBound}
\end{align}

We now diverge from the proof of Theorem~\ref{thm:batchAERegret}. 

Observe that if both $N_a^{t_0}> \frac{128\log T}{\Delta_a^2}$ and $N_a^{t_0}>\frac{16\sqrt{{t_0}}\sigma_{\eps,\delta}\cdot \sqrt{2\log T}}{\Delta_a}$, then we get a contradiction to Equation~\eqref{eq:variabledeltaaBound} since $\frac{4\cdot \sqrt{2\log T}}{\sqrt{N_a^{t_0}}}<\frac{\Delta_a}{2}$ and $\frac{8\sqrt{{t_0}}\sigma_{\eps,\delta}\cdot \sqrt{2\log T}}{N_a^{t_0}}< \frac{\Delta_a}{2}$ respectively. Hence, $N_a^{t_0}\leq \max\left( \frac{128\log T}{\Delta_a^2},\frac{16\sqrt{{t_0}}\sigma_{\eps,\delta}\cdot \sqrt{2\log T}}{\Delta_a}\right)$.
Therefore the total regret on arm $a$ is
\begin{align}
R_a&\leq 4\Delta_aN_a^{t_0}\leq \max\left( \frac{512\log T}{\Delta_a},64\sqrt{{t_0}}\sigma_{\eps,\delta}\cdot \sqrt{2\log T}\right)\nonumber\\
&\leq \frac{512\log T}{\Delta_a}+64\sqrt{2}\sigma_{\eps,\delta}\cdot \log T,\label{eq:variableRaBound}
\end{align}
where the first step follows since arm $a$ is eliminated following phase ${t_0}+1$ (or if $t_0$ is the last phase, then we finish and don't sample $a$ after it) with batch size $2^{t_0+1}=2\cdot 2^{t_0}\leq 3\cdot N_a^{t_0}$, and the third step follows since $m^{t_0}=2^{t_0}$ and ${t_0}\geq 1$, so $T\geq N_a^t=\sum_{s=1}^{{t_0}}m^s=2^{t_0+1}-2\geq 2^{t_0}$ and therefore ${t_0}\leq \log T$.

Similarly to the proof of Theorem~\ref{thm:batchAERegret} which uses Equation~\eqref{eq:RaBound} to get the distribution-dependent bound, here we use the analogous Equation~\eqref{eq:variableRaBound} to conclude that the distribution-dependent bound is 

\begin{align*}
    R(T)&=O\left(\sum_{a\in [k]:\Delta_a>0}\left(\frac{\log T}{\Delta_a}+\sigma_{\eps,\delta}\log T\right)\right)=O\left(\left(\sum_{a\in [k]:\Delta_a>0}\frac{\log T}{\Delta_a}\right)+k\sigma_{\eps,\delta}\log T\right).
\end{align*}

Now for the distribution-independent bound, similarly to the proof of Theorem~\ref{thm:batchAERegret}, assuming the clean event $C$, for any $\gamma>0$ it holds that the total regret
$$R\leq T\cdot \gamma + \frac{512k\log T}{\gamma}+64\sqrt{2}k\sigma_{\eps,\delta}\cdot \log T,$$
and specifically for $\gamma = \sqrt{\frac{512k\log T}{T}}$, the total regret
$R\leq \sqrt{2048k\cdot T\log T}+64\sqrt{2}k\sigma_{\eps,\delta}\cdot \log T $.
Similarly to the argument in Equation~\eqref{eq:expectedRegret}, conditioning on whether the clean event $C$ occurred or not, we conclude that the expected regret $R(T)$ satisfies
$$R(T)\leq \sqrt{2048k\cdot T\log T}+64\sqrt{2}k\sigma_{\eps,\delta}\cdot \log T+T\cdot 4T^{-2}=O\left(\sqrt{kT\log T}+k\sigma_{\eps,\delta}\log T\right).$$
\end{proof}

We recall that the private binary summation mechanism $M_{sum}$'s error distribution is sub-Gaussian with variance $\sigma_{\eps,\delta}^2=O\left(\frac{\log \frac{1}{\delta}}{\eps^2}\right)$, i.e., $\sigma_{\eps,\delta}=O\left(\frac{\sqrt{\log \frac{1}{\delta}}}{\eps}\right)$, and apply Theorem~\ref{thm:variablebatchAERegret} to get the following corollary.

\begin{corollary}\label{cor:variablebatchedAESDPRegret}
    \VBSDPAE is $(\eps,\delta)$-SDP and has a distribution-dependent regret of $O\left(\left(\sum_{a\in [k]:\Delta_a>0}\frac{\log T}{\Delta_a}\right)+\frac{k\sqrt{\log\frac{1}{\delta}}\log T}{\eps}\right)$, and a distribution-independent regret of $O\left(\sqrt{kT\log T}+\frac{k\sqrt{\log\frac{1}{\delta}}\log T}{\eps}\right)$.
\end{corollary}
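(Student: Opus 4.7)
The plan is straightforward: this corollary is essentially a direct substitution into Theorem~\ref{thm:variablebatchAERegret}, so most of the work has already been done. First, I would observe that privacy follows immediately: Theorem~\ref{thm:variablebatchAERegret} already establishes that \VBSDPAE is $(\eps,\delta)$-SDP (since each batch invokes the $(\eps,\delta)$-SDP mechanism $M_{sum}$ exactly once, and users appear in at most one batch). So the only remaining task is to translate the regret bounds from Theorem~\ref{thm:variablebatchAERegret}, which are stated in terms of the abstract parameter $\sigma_{\eps,\delta}$, into concrete bounds in terms of $\eps$ and $\delta$.

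Next, I would invoke the guarantee on $M_{sum}$ stated in Section~\ref{sec:sdpMAB}, namely that its additive error is sub-Gaussian with variance $\sigma_{\eps,\delta}^2 = O(\log(1/\delta)/\eps^2)$, and therefore $\sigma_{\eps,\delta} = O(\sqrt{\log(1/\delta)}/\eps)$. Substituting this into the distribution-dependent bound $O\bigl((\sum_{a:\Delta_a>0}\log T/\Delta_a) + k\sigma_{\eps,\delta}\log T\bigr)$ yields the claimed
\[
O\left(\left(\sum_{a\in [k]:\Delta_a>0}\frac{\log T}{\Delta_a}\right)+\frac{k\sqrt{\log\frac{1}{\delta}}\log T}{\eps}\right),
\]
and substituting into the distribution-independent bound $O\bigl(\sqrt{kT\log T}+k\sigma_{\eps,\delta}\log T\bigr)$ yields $O\bigl(\sqrt{kT\log T}+k\sqrt{\log(1/\delta)}\log T/\eps\bigr)$, as desired.

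Since this corollary is a mechanical substitution with no further analysis, there is no genuine obstacle. The only subtlety worth a sentence is emphasizing that the improvement over Corollary~\ref{cor:batchedAESDPRegret} (which had $1/\eps^2$ in the additive term) comes entirely from the fact that the analysis in Theorem~\ref{thm:variablebatchAERegret} produces a regret term linear in $\sigma_{\eps,\delta}$ rather than in $\sigma_{\eps,\delta}^2$, a savings that is enabled by the exponentially growing batch sizes $m^t = 2^t$ in \VBSDPAE.
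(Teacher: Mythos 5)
Your proposal is correct and matches the paper's own derivation exactly: the paper likewise obtains Corollary~\ref{cor:variablebatchedAESDPRegret} by noting $\sigma_{\eps,\delta}=O\left(\sqrt{\log\frac{1}{\delta}}/\eps\right)$ and substituting this into the bounds of Theorem~\ref{thm:variablebatchAERegret}. Nothing further is needed.
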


\section{Conclusion and future work}
In this paper, we gave and analyzed differentially private algorithms for the MAB problem, closing the inevitable multiplicative $\Omega(1/\eps^2)$ regret gap between the local model and the centralized model, by considering the (intermediate) shuffle model. Our algorithms are batched variants of \AE, which use a private binary summation mechanism for the shuffle model as a building block. Compared to the non-private \AE algorithm's regret, our first algorithm \SDPAE has an additive factor of $\frac{k\log \frac{1}{\delta}}{\eps^2}$ using constant size batches, and our second algorithm \VBSDPAE improves the additive factor to $\frac{k\sqrt{\log \frac{1}{\delta}}\log T}{\eps}$ by using exponentially growing batches, which enable the early detection and elimination of very suboptimal arms.

A natural future work is to extend our results (i.e., the usage of a private binary summation mechanism for the shuffle model) to more general RL settings such as linear/contextual bandits or Markov decision processes. It would also be interesting to study whether our $\log T$ term in the additional additive regret factor can be shaved through a more sophisticated algorithm, or an alternative analysis.

\section*{Disclosure of Funding}
This work is partially supported by Israel Science Foundation (grants 993/17,1595/19,1871/19), German-Israeli Foundation (grant 1367/2017), Len Blavatnik and the Blavatnik Family Foundation, the European Research Council (ERC) under the European Union’s Horizon 2020 research and innovation program (grant agreement 882396),  the Yandex Initiative for Machine Learning at Tel Aviv University.


\bibliographystyle{abbrv}
\newpage
\bibliography{references}

\newpage
\textbf{\Large Appendix}
\appendix

    



\section{Private binary summation mechanism for the shuffle model}\label{sec:privateSummation}
In this section, for any $\eps,\delta\in (0,1)$ and number of users, we give an $(\eps,\delta)$-SDP private binary summation mechanism for the shuffle model, with an (additive) error distribution which is unbiased and sub-Gaussian with variance $\sigma_{\eps, \delta}^2 =O\left(\frac{\log (1/\delta)}{\eps^2}\right)$, and which does not depend on the input. Consider a group of $m$ users, each with a binary value $x_i\in \{0,1\}$, and the target is to calculate the sum $\sum_{i=1}^{m}x_i$. Our mechanism splits to two different internal mechanisms based on whether $m$ is \enquote{small} or \enquote{large}. Intuitively, to ensure that we add noise which is roughly $\frac{1}{\eps}$, when we have less than roughly $\frac{1}{\eps^2}$ users, each one adds several bits of noise, and when we have more than roughly $\frac{1}{\eps^2}$ users, each one adds a single bit of noise with some bias. This mechanism is summarized below:

\begin{algorithm}[H]\label{alg:sumMechanism}
\SetAlgoLined
$\tau\gets\frac{96\log (2/\delta)}{\eps^2}$\;
~\\
// Local Randomizer\\
\SetKwProg{Fn}{Function}{:}{}
\Fn{$E(x)$}{
    \eIf{$m\leq \tau$}{
        \KwRet $(x,y_{1},\ldots,y_{p})$ where $\{y_{j}\}_{j=1}^{p}$ are iid $y_{j}\sim Bernoulli(1/2)$, and $p=\ceil{\frac{\tau}{m}}$\;
    }{
        \KwRet $(x,y)$ where $y\sim Bernoulli\left(\frac{\tau}{2m}\right)$\;
    }
}
~\\

// Analyzer\\
\SetKwProg{Fn}{Function}{:}{}
\Fn{$A(z_1,\ldots,z_n)$}{
    \eIf{$m\leq \tau$}{
        \KwRet $\sum_{j=1}^{n}z_1 - \ceil{\frac{\tau}{m}}\cdot m/2$\;
    }{
        \KwRet $\sum_{j=1}^{n}z_1 - \tau/2$\;
    }
}
\caption{$(\eps,\delta)$-SDP binary summation mechanism for $m$ users}
\end{algorithm}

\begin{theorem}
    For any $m\in \naturals$, $\eps<1$ and $\delta>0$, 
    Algorithm~\ref{alg:sumMechanism} is $(\eps,\delta)$-SDP, unbiased, and has an error distribution which is sub-Gaussian with variance $\sigma_{\eps, \delta}^2 =O\left(\frac{\log (1/\delta)}{\eps^2}\right)$ and independent of the input.
\end{theorem}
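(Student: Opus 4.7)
The organizing observation for the proof is that the shuffler $S$ returns a uniformly random permutation, so the analyzer's view is statistically equivalent to the unordered multiset of received bits. For binary messages this multiset is summarized entirely by its count of $1$'s, which equals $\sum_{i=1}^m x_i + W$, where $W$ denotes the sum of the auxiliary \enquote{noise bits} $y_{i,j}$ generated by the users. By construction, $W$ is independent of the $x_i$'s, so both the privacy and the correctness analyses reduce to understanding the distribution of $W$ in each of the two regimes $m\leq \tau$ and $m>\tau$, where $\tau = 96\log(2/\delta)/\eps^2$.

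For privacy, by the reduction above it suffices to show that the law of $\sum_i x_i + W$ is $(\eps,\delta)$-indistinguishable under shifting $\sum_i x_i$ by $\pm 1$. In the small-$m$ regime, $W \sim \mathrm{Bin}(N, 1/2)$ with $N = m\ceil{\tau/m}\geq \tau$, and I would invoke the standard privacy analysis of the centered binomial mechanism (as in Cheu et al.): $\mathrm{Bin}(N,1/2)$ noise suffices for $(\eps,\delta)$-DP at sensitivity $1$ whenever $N \geq c\log(1/\delta)/\eps^2$ for a suitable absolute constant $c$, and $\tau$ has been chosen large enough. In the large-$m$ regime, $W \sim \mathrm{Bin}(m,\tau/(2m))$, a binomial whose success probability is small but whose mean and variance are both $\Theta(\tau)$. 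I would establish privacy either by a direct likelihood-ratio calculation on the binomial pmf, bounding $\Pr[W=k]/\Pr[W=k\pm 1]$ via the binomial mass formula and collecting the $e^\eps$-violating tails to at most $\delta$, or by coupling $W$ with a $\mathrm{Bin}(N',1/2)$ noise of comparable variance to reduce to the first case.

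Unbiasedness and independence of the error from the input are immediate by linearity of expectation: $\expect{}{W}$ equals exactly the constant subtracted by the analyzer ($\ceil{\tau/m}\cdot m/2$ or $\tau/2$), and the error $W - \expect{}{W}$ depends only on the users' private randomness, not on the $x_i$'s. For the sub-Gaussian claim, the small-$m$ case is direct: centered $\mathrm{Bin}(N,1/2)$ is a sum of $N$ independent $[-1/2,1/2]$-bounded variables and hence sub-Gaussian with variance proxy $N/4 \leq 2\tau/4 = O(\log(1/\delta)/\eps^2)$, since $N \leq m(\tau/m + 1) \leq 2\tau$ whenever $m\leq\tau$.

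The technical obstacle I expect is the large-$m$ sub-Gaussian bound. Here $W - \tau/2$ is a centered sum of $m$ Bernoullis with tiny success probability $\tau/(2m)$, and a naive Hoeffding bound only yields variance proxy $m/4$, which is far too large. A direct MGF computation gives $\expect{}{\exp(\lambda(W - \tau/2))} \leq \exp\bigl((\tau/2)(e^\lambda - 1 - \lambda)\bigr)$, which matches a sub-Gaussian MGF with variance $O(\tau)$ only on a bounded range of $\lambda$ (reflecting the Poisson-like rather than Gaussian tails of binomials with small $p$). I would resolve this via a refined Bennett/Bernstein-type argument or by coupling with a truly sub-Gaussian surrogate of matching variance, to obtain a variance proxy of $O(\tau) = O(\log(1/\delta)/\eps^2)$ sufficient for the downstream Hoeffding applications in the proofs of Theorems~\ref{thm:batchAERegret} and~\ref{thm:variablebatchAERegret}.
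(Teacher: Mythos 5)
Your plan follows the paper's own proof almost step for step: reduce the shuffled output to the count of ones, split into the two regimes of $m$, bound the likelihood ratio $P(W=t)/P(W'=t-1)$ on a high-probability interval and charge the tails to $\delta$, and observe that unbiasedness and input-independence are immediate because the analyzer subtracts exactly $\expect{}{W}$. (The paper carries out the small-$m$ likelihood-ratio computation explicitly rather than citing the binomial mechanism, but it is the same calculation.) The one place you diverge is the large-$m$ sub-Gaussian bound, and there your diagnosis is actually sharper than the paper's treatment: the paper invokes the multiplicative Chernoff bound $P(\abs{B-\expect{}{B}}\geq t)\leq\exp(-t^2/(3\expect{}{B}))$ as though it held for all $t>0$ and then applies the tail characterization of sub-Gaussianity, but that Chernoff bound is only valid for deviations up to roughly $\expect{}{B}$, and for $Binomial(m,\tau/(2m))$ with $m\gg\tau$ the upper tail really is Poisson-like, exactly as your MGF bound $\expect{}{e^{\lambda(W-\tau/2)}}\leq\exp\left((\tau/2)(e^{\lambda}-1-\lambda)\right)$ indicates. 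Be aware, though, that neither a Bennett/Bernstein refinement nor a coupling can rescue the claim in full generality: the event $\{W=m\}$ has probability $(\tau/2m)^m$, which forces any valid variance proxy (in either the MGF or the all-$t$ tail sense) to be at least $\Omega(m/\log(2m/\tau))$, so ``sub-Gaussian with variance $O(\tau)$'' is not literally attainable when $m\gg\tau$. The honest resolution, which your last sentence gestures at, is that Theorems~\ref{thm:batchAERegret} and~\ref{thm:variablebatchAERegret} only ever use concentration of sums of $t$ such errors at deviations of order $\sqrt{\tau t\log T}$, a moderate-deviation range in which Bernstein's inequality gives the required $T^{-O(1)}$ failure probability with effective per-term variance $O(\tau)$; one should either restate the sub-Gaussian claim as holding only in that deviation range or replace the downstream Hoeffding applications by Bernstein. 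So: same approach as the paper, your flagged obstacle is genuine, and it is a weakness of the paper's own proof rather than of your reconstruction.
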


\begin{proof}
We first prove that the mechanism is $(\eps,\delta)$-SDP, and then prove the other claims. 

Indeed, consider two neighboring inputs $X=(0,x_2,\ldots,x_m)$ and $X'=(1,x_2,\ldots,x_m)$.
To ease on the analysis, we define the random variable $B$ to be the sum of all the random bits (i.e., $y$ or $y_1,\ldots,y_p$ depending on $m$) over all users in $X$. We define $B'$ identically with respect to $X'$.

We first claim that the sum $M^*(X) = B+\sum_{j=1}^{m}x_j$ of the shuffled reported bits is $(\eps,\delta)$-DP. 

Since $B$ is binomial in both regimes, by Chernoff bounds as in Theorem E.1 in Cheu et al.~\cite{cheu2019distributed}, for any $\delta>0$ it holds that $P\left(\abs{B-\expect{}{B}}\geq \sqrt{3\expect{}{B}\log \frac{2}{\delta}}\right)<\delta$. Therefore, define $I_c=\left(\expect{}{B}-\sqrt{3\expect{}{B}\log \frac{2}{\delta}}, \expect{}{B}+\sqrt{3\expect{}{B}\log \frac{2}{\delta}}\right)$, and we get that $P(B\notin I_c)\leq\delta$ (and similarly for $B'$).

To show that $ \frac{P(B=t)}{P(B'=t-1)}\leq e^\eps$ for any $t\in I_c$, we split to the two regimes of $m$: the small $m\leq \tau$ regime, and the large $m>\tau$ regime.

\paragraph{Small $m\leq \tau$:}
In this case, $B\sim Binomial(\ceil{\frac{\tau}{m}}\cdot m,1/2)$, so $\expect{}{B}=\ceil{\frac{\tau}{m}}\cdot m/2$. For any $t\in I_c$, it holds that


\begin{align}
     \frac{P(B=t)}{P(B'=t-1)}& =\frac{2\expect{}{B}-t+1}{t}
     \leq \frac{\expect{}{B}+\sqrt{3\expect{}{B}\log \frac{2}{\delta}}+1}{\expect{}{B}-\sqrt{3\expect{}{B}\log \frac{2}{\delta}}}\nonumber\\
    &\leq \frac{\tau/2+\sqrt{\tau/2\cdot 3\log \frac{2}{\delta}}+1}{\tau/2-\sqrt{\tau/2\cdot 3\log \frac{2}{\delta}}}=\frac{1+\sqrt{6\log \frac{2}{\delta}/\tau}+2/\tau}{1-\sqrt{6\log \frac{2}{\delta}/\tau}}\nonumber\\
    &=\frac{1+\eps/4+2/\tau}{1-\eps/4}
    \leq \frac{1+\eps/4+\frac{\eps}{4}}{1-\eps/4}
    =\frac{1+\eps/2}{1-\eps/4}\leq e^{\eps},
\end{align}
where the first step follows since $B,B'$ are iid binomial with $\ceil{\frac{\tau}{m}}\cdot m=2\expect{}{B}$ trials of success probability $1/2$, the second step follows since $t\in I_c\Rightarrow t\geq \expect{}{B}-\sqrt{3\expect{}{B}\log \frac{2}{\delta}}$ (which is non-negative) and since $\frac{2\expect{}{B}+1-t}{t}$ is a decreasing function of $t$ for $t\geq 0$, the third step follows since $\frac{x+\sqrt{ax}+1}{x-\sqrt{ax}}$ is a decreasing function of $x$ for $x>a$, where we take $a=3\log \frac{2}{\delta}$ and $x=\expect{}{B}\geq \tau/2>a$, the fourth step follows by dividing the nominator and the denominator by $\tau/2$, the fifth step follows by the definition of $\tau$, the sixth step follows since $\eps<1$ so $\tau\geq 8/\eps$, and the last step follows since $\frac{1+x/2}{1-x/4}\leq e^{x}$ for any $x\in [0,1]$.
This concludes the case $m\leq\tau$.

\paragraph{Large $m>\tau$:}
In this case, $B\sim Binomial(m,\frac{\tau}{2m})$, so $\expect{}{B}=\tau/2$.
For any $t\in I_c$, it holds that

\begin{align}
     \frac{P(B=t)}{P(B'=t-1)}& =\frac{m-t+1}{t}\cdot \frac{\frac{\tau}{2m}}{1-\frac{\tau}{2m}}
     \leq \frac{m-\tau/2+\sqrt{\frac{3}{2}\tau\log \frac{2}{\delta}}+1}{\tau/2-\sqrt{\frac{3}{2}\tau\log \frac{2}{\delta}}}\cdot \frac{\frac{\tau}{2m}}{1-\frac{\tau}{2m}}\nonumber\\
    &=\frac{m-\tau/2+\sqrt{\frac{3}{2}\tau\log \frac{2}{\delta}}+1}{\tau/2-\sqrt{\frac{3}{2}\tau\log \frac{2}{\delta}}}\cdot \frac{\tau/2}{m-\tau/2}\nonumber\\
    &=\frac{m-\tau/2+\sqrt{\frac{3}{2}\tau\log \frac{2}{\delta}}+1}{m-\tau/2}\cdot \frac{\tau/2}{\tau/2-\sqrt{\frac{3}{2}\tau\log \frac{2}{\delta}}}\nonumber\\
    &= \left(1+\frac{\sqrt{\frac{3}{2}\tau\log \frac{2}{\delta}}+1}{m-\tau/2}\right)\cdot\frac{1}{1-\sqrt{6\log \frac{2}{\delta}/\tau}}\nonumber\\
    &\leq \left(1+\sqrt{6\log \frac{2}{\delta}/\tau}+2/\tau\right)\cdot\frac{1}{1-\sqrt{6\log \frac{2}{\delta}/\tau}}\nonumber\\
    &=\frac{1+\eps/4+2/\tau}{1-\eps/4}\leq \frac{1+\eps/4+\eps/4}{1-\eps/4}\leq e^{\eps},
\end{align}
where the first step follows since $B,B'$ are iid binomial with $m$ trials of success probability $\frac{\tau}{2m}$, the second step follows since $t\in I_c\Rightarrow t\geq \expect{}{B}-\sqrt{3\expect{}{B}\log \frac{2}{\delta}}=\tau/2-\sqrt{\frac{3}{2}\tau\log \frac{2}{\delta}}$ (which is non-negative) and since $\frac{m+1-t}{t}$ is a decreasing function of $t$ for $t\geq 0$, the sixth step follows since $m-\tau/2\geq \tau-\tau/2 =\tau/2$, the seventh step follows by the definition of $\tau$, the eighth step follows since $\eps<1$ so  $\tau\geq 8/\eps^2\geq 8/\eps$, and the last step follows since $\frac{1+x/2}{1-x/4}\leq e^{x}$ for any $x\in [0,1]$. This concludes the case $m>\tau$.

We therefore conclude that in both regimes of $m$, 
\begin{align}
    \forall t\in I_c,~ \frac{P(B=t)}{P(B'=t-1)}\leq e^\eps. \label{eq:BratioLowerBound}
\end{align}
A dual argument shows that $ \frac{P(B=t)}{P(B'=t-1)}\geq e^{-\eps}$ using the fact that $t\in I_c$ so $t\leq \expect{}{B}+\sqrt{3\expect{}{B}\log \frac{2}{\delta}}$ and substituting in the value of $\expect{}{B}$ as in the cases above. 

We define $k=\sum_{j=2}^{m}x_j$ to be the true sum of the bits of  $X$, and the true sum of the bits of $X'$ minus one.
Therefore, for any $F\subseteq \naturals$ it holds that 

\begin{align*}
    P(M^*(X)\in F)&=P(M^*(X)\in F\wedge  B\in I_c)+P(M^*(X)\in F\wedge  B\notin I_c)\\
    &\leq P(M^*(X)\in F\wedge  B\in I_c)+P(B\notin I_c)\\
    &\leq P(M^*(X)\in F\wedge  B\in I_c)+\delta\\
    &=\delta + \sum_{s\in F}P(M^*(X)=s\wedge  B\in I_c)\\
    &=\delta + \sum_{s\in F}P(B=s-k\wedge  B\in I_c)\\
    &=\delta + \sum_{s\in F}P(B=s-k\wedge  s-k\in I_c)\\
    &\leq \delta + \sum_{s\in F}e^{\eps}\cdot P(B'=s-k-1\wedge  s-k\in I_c)\\
    &=\delta + e^{\eps}\cdot \sum_{s\in F}P(M^*(X')=s\wedge  s-k\in I_c)\\
    &\leq \delta + e^{\eps}\cdot \sum_{s\in F}P(M^*(X')=s)\\
    &=\delta + e^{\eps}\cdot P(M^*(X')\in F),
\end{align*}
where the first step follows by the law of total probability, the third step follows since $P(B\notin I_c)\leq \delta$, the fourth step follows by the law of total probability, the fifth step follows by the definition of $M^*(X)=k+B$, the seventh step follows by substituting $t\gets s-k\in I_c$ into Equation~\eqref{eq:BratioLowerBound}, and the eighth step follows by the definition of $M^*(X')=k+1+B'$.
A similar dual argument uses the fact that $\forall t\in I_c,~ \frac{P(B=t)}{P(B'=t-1)}\geq e^{-\eps}$ to show that $P(M^*(X')\in F) \leq \delta + e^{\eps}P(M^*(X)\in F)$, and we conclude that $M^*$ is $(\eps,\delta)$-DP.

To see that $M$ is $(\eps,\delta)$-SDP, note that in our mechanism $M(X)$, given the number of users $m$, the total number of bits $U$ that the server receives is constant. Therefore, the shuffler's output is a random permutation of its input, which is of constant size. Thus, the shuffler's output's distribution is identical to the output distribution of the mechanism which first selects the number $s$ of ones in the shuffler's input where $s\sim M^*(X)$, and then post-processes the output $s$ by outputting a randomly shuffled binary vector with $s$ ones, and $U-s$ zeros. Since we have shown that $M^*$ is $(\eps,\delta)$-DP, by post-processing arguments we conclude the shuffler's output is $(\eps,\delta)$-DP, so $M$ is $(\eps,\delta)$-SDP.

Now for the other claims, first recall that in both regimes of $m$, the output of the mechanism is of the form $z=B+\sum_{j=1}^{m}x_j - \expect{}{B}$ where $B$ is the only source of randomness in the mechanism. Therefore, the mechanism is unbiased since $\expect{}{z-\sum_{j=1}^{m}x_j}=\expect{}{B- \expect{}{B}}=0$.  In addition, the (additive) error of the mechanism which is precisely $B- \expect{}{B}$, is obviously independent of the input $\{x_i\}_{i=1}^{m}$, and only depends on the natural parameters of the problem.

Finally, to see that the mechanism's additive error is sub-Gaussian with variance $O\left(\frac{\log (1/\delta)}{\eps^2}\right)$, it suffices to show that $B$ is sub-Gaussian with variance $O\left(\frac{\log (1/\delta)}{\eps^2}\right)$, since $B$ is the additive error shifted by a constant (this constant is $\expect{}{B}$ and adding constants does not change the sub-Gaussian variance). 
Indeed, recall that in both cases $B$ is binomial, where in the small $m\leq \tau$ case $\expect{}{B}=\ceil{\frac{\tau}{m}}\cdot m/2\leq (\tau +m)/2\leq (\tau +\tau)/2 = \tau$, and in the large $m> \tau$ case $\expect{}{B}=\tau/2\leq \tau$ as well. Therefore, by Chernoff bounds as in Theorem E.1 in Cheu et al.~\cite{cheu2019distributed}, we get that for any $t>0$, $P\left(B-\expect{}{B}\leq t\right)<\exp\left(\frac{-t^2}{3\expect{}{B}}\right)\leq \exp\left(\frac{-t^2}{3\tau}\right) $ and $P\left(B-\expect{}{B}\geq -t\right)<\exp\left(\frac{-t^2}{3\expect{}{B}}\right)=\exp\left(\frac{-t^2}{3\tau}\right)$, so by the equivalent definition of a sub-Gaussian variable, $B$ is sub-Gaussian with parameter $O(\tau)=O\left(\frac{\log (1/\delta)}{\eps^2}\right)$.
\end{proof}

\end{document}